\begin{document}
\newcommand{\PP}[1]{{\color{red}{#1}}}

%\newcommand{\qed}{\hfill \ensuremath{\Box}}
\begin{comment}
\newenvironment{proof}{
	\vspace*{-\parskip}\noindent\textit{Proof.}}{$\qed$
	
	\medskip
}
\end{comment}

\begin{comment}
\newtheorem{remark}[theorem]{Remark}
\newtheorem{property}[theorem]{Property}
\end{comment}

\newcommand{\alg}[1]{\mathsf{#1}}
\newcommand{\Prover}{\alg{P}}
\newcommand{\Verifier}{\alg{V}}
\newcommand{\Simulator}{\alg{S}}
\newcommand{\PPT}{\alg{PPT}}
\newcommand{\isom}{\cong}
\newcommand{\from}{\stackrel{\scriptstyle R}{\leftarrow}}
\newcommand{\handout}[5]{
	\noindent
	\begin{center}
		\framebox{
			\vbox{
				\hbox to 5.78in { {\bf Hybrid Systems} \hfill #2 }
				\vspace{4mm}
				\hbox to 5.78in { {\Large \hfill #5  \hfill} }
				\vspace{2mm}
				\hbox to 5.78in { {\it #3 \hfill #4} }
			}
		}
	\end{center}
	\vspace*{4mm}
}

\newcommand{\ho}[5]{\handout{#1}{#2}{Guide:
		#3}{#4}{#5}}
\newcommand{\al}{\alpha}
\newcommand{\Mod}[1]{\ (\mathrm{mod}\ #1)}
\newcommand{\nd}{\wedge}
\newcommand{\defn}{\coloneqq}
\newcommand{\Z}{\mathbb Z}
\newcommand{\real}{\mathbb{R}}
\newcommand{\nat}{\mathbb{N}}
\newcommand{\calH}{\mathcal{H}}
\newcommand{\reduce}[1]{#1^{\text{red}}}
\newcommand{\loc}{Q}
\newcommand{\calX}{\mathcal{X}}
\newcommand{\poly}[1]{Poly(#1)}
\newcommand{\inv}{\textit{I}}
\newcommand{\flow}{\textit{F}}
\newcommand{\guard}{Guard}
\newcommand{\eguard}{\mathcal{G}}
\newcommand{\edges}{\textit{E}}
\newcommand{\dist}[1]{Dist(#1)}
\newcommand{\itstar}{\item[$\bigstar$]}
\newcommand{\vertx}{V}
\newcommand{\gredge}{E}
\newcommand{\ewt}{W}
\newcommand{\ec}{E_c}
\newcommand{\ep}{E_p}
\newcommand{\graph}{G}
\newcommand{\initnode}{I_0}
\newcommand{\initst}{s_{\textit{init}}}
\newcommand{\stablest}{s_0}
\newcommand{\pf}{P} %probability function execution graph
\newcommand{\Path}{\sigma}
\newcommand{\pathend}[1]{{#1}_{\textit{end}}}
\newcommand{\infpath}{\sigma_{\infty}}
\newcommand{\len}[1]{len(#1)}
\newcommand{\indx}{I}
\newcommand{\union}{\cup}
\newcommand{\bigunion}{\bigcup}
\newcommand{\intersect}{\cap}
\newcommand{\bigintersect}{\bigcap}
\newcommand{\ball}[2]{B_{#1}(#2)}
\newcommand{\prob}{Pr}
\newcommand{\probpath}[2]{P_{#1}(#2)}
\newcommand{\tildeprob}{\tilde{Pr}}
\newcommand{\fpath}[1]{\textit{Paths}_{\textit{fin}}(#1)}
\newcommand{\cyl}[1]{\textit{Cyl}(#1)}
\newcommand{\apath}[1]{Paths(#1)}
\newcommand{\ipath}[1]{\textit{Paths}(#1)}
\newcommand{\spath}{\mathcal{SP}}
\newcommand{\scyl}{\mathcal{SC}}
\newcommand{\edgeset}{\mathcal{E}}
\newcommand{\sigal}{\mathcal{F}}
\newcommand{\states}{S}
\newcommand{\ctran}{\rightarrow_c}
\newcommand{\ptran}{\rightarrow_p}
\newcommand{\tran}{P}
\newcommand{\stran}{S_\rightarrow}
\newcommand{\sstran}[1]{{#1}_{\rightarrow}}
\newcommand{\metric}{d}
\newcommand{\weight}{W}
\newcommand{\restrict}[2]{#1|#2}
\newcommand{\wdtmc}{\mathcal{M}_W}
\newcommand{\boundary}[1]{\partial(#1)}
\newcommand{\norm}[2]{\lvert\!\lvert #1 \rvert\!\rvert}
\newcommand{\pathset}[3]{\Sigma_{#1}^{#2}#3}
\newcommand{\partition}{\mathcal{P}}
\newcommand{\size}[1]{\left\vert #1\right\vert}
\newcommand{\abtrct}[2]{#1/#2}
\newcommand{\face}{\mathbb{F}}
\newcommand{\cycle}{\mathcal{C}}
\newcommand{\pwavg}[1]{S_{#1}}
\newcommand{\ldecomp}[1]{#1^{\mathcal{L}}}
\newcommand{\fdecomp}[1]{#1^d}
\newcommand{\pcdecomp}[1]{#1^{\mathcal{SP}\union\mathcal{SC}}}
\newcommand{\tildewt}{\tilde{W}}
\newcommand{\rhost}{\rho^*}
\newcommand{\pst}[1]{p^{st}(#1)}
\newcommand{\mstep}[1]{\overset{\mathrm{#1}}{\leadsto}}
\newcommand{\cov}{Cov}
\newcommand{\var}{Var}
\newcommand{\linineq}[1]{L_{#1}}
\newcommand{\coreq}[1]{{#1}_{eq}}
\newcommand{\reach}{\rightsquigarrow}

%hyperlogic macros
\newcommand{\ubar}[1]{\text{\b{$#1$}}}
\newcommand{\ap}{AP}
\newcommand{\until}[1]{U^{\leq {#1}}}
\newcommand{\infuntil}{U}
\newcommand{\true}{\top}
\newcommand{\false}{\bot}
\newcommand{\opAnd}{\wedge}
\newcommand{\Or}{\vee}
\newcommand{\some}[1]{\Diamond^{\leq {#1}}}
\newcommand{\infsome}{\Diamond}
\newcommand{\all}[1]{\Box^{\leq {#1}}}
\newcommand{\infall}{\Box}
\newcommand{\ineqs}{\bowtie}
\newcommand{\next}{\bigcirc}
\newcommand{\Label}{L}
\newcommand{\assign}{V}
\newcommand{\pathvars}{\Pi}
\newcommand{\finpathvars}{\Pi_{fin}}
\newcommand{\semantics}[1]{{\llbracket #1\rrbracket}}
\newcommand{\shift}[2]{#1^{(#2)}}
\newcommand{\notmodels}{\nvDash}
\newcommand{\thresh}[1]{threshold_{#1}}
\newcommand{\nullhyp}{H_0}
\newcommand{\althyp}{H_1}
\newcommand{\sample}[1]{\mathbf{#1}}
\newcommand{\statistic}{T}
\newcommand{\falsepos}{\alpha_{FP}}
\newcommand{\falseneg}{\alpha_{FN}}
\newcommand{\range}[1]{[#1]}
\newcommand{\complmnt}[1]{{#1}^c}
\newcommand{\pred}{\mathbb{P}}
\newcommand{\pdf}{f}
\newcommand{\cdf}{F}
\newcommand{\bayes}{\mathbb{B}}
\newcommand{\samsp}{\Omega}
\newcommand{\extset}[2]{{#1}^{#2 +}}
\newcommand{\deductset}[2]{{#1}^{#2 -}}

%MDP macros
\newcommand{\acts}{A}
\newcommand{\trans}{T}
\newcommand{\mdp}{\text{MDP}}
\newcommand{\rmdp}{\text{WMDP}}
\newcommand{\M}{\mathcal{M}}
\newcommand{\Mw}{\mathcal{M}_W}
\newcommand{\policy}{\rho}
\newcommand{\optpol}{\rho^*}
\newcommand{\polset}{\Gamma}
\newcommand{\dtmc}{\text{DTMC}}
\newcommand{\rdtmc}{\text{WDTMC}}
\newcommand{\scc}{\text{SCC}}
\newcommand{\bscc}{\text{BSCC}}
\newcommand{\sccset}{\mathcal{S}_G}
\newcommand{\Md}{\mathcal{M}_D}
\newcommand{\init}{\textit{Init}}
\newcommand{\borel}{\mathcal{B}}
\newcommand{\dtshs}{\text{dt-SHS}}
\newcommand{\shs}{\text{SHS}}
\newcommand{\currst}{s_{curr}}
\newcommand{\reward}{R}
\newcommand{\avg}{E}
\newcommand{\meanpay}{\gamma}
\newcommand{\support}[1]{\textit{support}(#1)}
\newcommand{\pphs}{\text{PPHS}}
\newcommand{\convp}[1]{\overset{#1}{\rightarrow}}
\newcommand{\submin}[1]{#1_{\textit{min}}}
\newcommand{\submax}[1]{#1_{\textit{max}}}
\newcommand{\bsccset}{\mathcal{B}}
\newcommand{\B}{B}
\newcommand{\Sc}{S_c}
\newcommand{\calD}{\mathcal{D}}
\newcommand{\calDw}{\mathcal{D}_W}
\newcommand{\eq}[1]{{#1}_{\textit{eq}}}
\newcommand{\memless}[1]{{#1}_{\textit{mem}}}

\title{Abstraction-based Probabilistic Stability Analysis of Polyhedral Probabilistic Hybrid Systems
%\thanks{}
}
\titlerunning{Stability of PPHS}
% If the paper title is too long for the running head, you can set
% an abbreviated paper title here
%

%\begin{comment}
\author{Spandan Das\inst{1}\orcidID{0000-0002-1995-2592} \and
Pavithra Prabhakar\inst{1} 
%Third Author\inst{3}\orcidID{2222--3333-4444-5555}
}
\authorrunning{S. Das and P. Prabhakar}
%\end{comment}
% First names are abbreviated in the running head.
% If there are more than two authors, 'et al.' is used.
%
%\begin{comment}
\institute{Kansas State University, Manhattan, Kansas, USA\\
\email{\{spandan,pprabhakar\}@ksu.edu}
}
%\end{comment}
%
\maketitle              % typeset the header of the contribution
%
%\begin{comment}
\begin{abstract}
In this paper, we consider the problem of probabilistic stability analysis of a subclass of Stochastic Hybrid Systems, namely, \emph{Polyhedral Probabilistic Hybrid Systems ($\pphs$)}, where the flow dynamics is given by a polyhedral inclusion, the discrete switching between modes happens probabilistically at the boundaries of their invariant regions and the continuous state is not reset during switching. 
We present an abstraction-based analysis framework that consists of constructing a finite Markov Decision Processes ($\mdp$) such that verification of certain property on the finite $\mdp$ ensures the satisfaction of probabilistic stability on the $\pphs$.
Further, we present a polynomial-time algorithm for verifying the corresponding property on the $\mdp$.
Our experimental analysis demonstrates the feasibility of the approach in successfully verifying probabilistic stability on $\pphs$ of various dimensions and sizes.

\keywords{Polyhedral Probabilistic Hybrid System  \and Stability \and Markov Decision Process}
\end{abstract}
%\end{comment}
%
%
%
%\input{drw}
%\input{lp}
\section{Introduction}
Stability is a fundamental property of hybrid control systems that stipulates that, small changes in initial state or inputs lead to only small deviations in the behaviors of the system, and that the effect of those perturbations on the system behaviors diminishes over time.
Probabilistic stability~\cite{rutten2004mathematical} extends this notion to stochastic systems which model uncertainties in the environment. 
In this paper, we study probabilistic stability of a certain kind of Stochastic Hybrid Systems ($\shs$), namely, Polyhedral Probabilistic Hybrid Systems ($\pphs$), which are $\shs$ where flow rates are constrained by linear inequalities, and the mode switches probabilistically when the continuous state satisfies certain linear constraints. 
These systems are powerful due to the non-determinism in the dynamics, and can precisely over-approximate linear hybrid systems~\cite{prajna2003analysis,prabhakar2016hybridization} through a process called hybridization.

While safety analysis of $\pphs$ \cite{prajna2004safety,lal2018hierarchical,clarke2003abstraction,alur2003counter,lal2019counterexample} and stability of polyhedral hybrid systems in the non-probabilistic setting \cite{prabhakar2013abstraction,prabhakar2016hybridization,prabhakar2013decidability} have been extensively investigated, stability analysis remains an open problem. 
Classically, stability analysis techniques have been built on the notion of Lyapunov functions\cite{Branicky98,davrazos2001review,liberzon2003switching,van2000introduction} and have been extended to the setting of stochastic systems\cite{kozin1969survey,verdejo2012stability,zhang2014exponential,prajna2003analysis}. 
A detailed study on sufficient conditions for stability of $\shs$ based on Lyapunov functions has been performed in~\cite{teel2014stability}. Almost sure exponential stability~\cite{cheng2012almost,cheng2018almost,do2020almost,hu2008almost} and asymptotic stability in 
distribution~\cite{yuan2003asymptotic,wang2019asymptotic} using Lyapunov functions have also been investigated for $\shs$.
While Lyapunov functions provide a certificate of stability, computing them is quite challenging as it involves exploring complex polynomial templates and deducing coefficients of such templates by solving non-linear optimization problems\cite{Branicky98,giesl2015review}. 
An alternative but much less explored method involves an abstraction-based analysis, that has shown promise in the non-probabilistic setting~\cite{prabhakar2013abstraction}, and more recently in the probabilistic setting in low dimensions~\cite{das2022stability}. 
In this paper, we present an abstraction-based analysis technique for probabilistic stability analysis of $\pphs$ by abstracting the system to a finite Markov Decision Process and checking that an infinite path converges to an equilibrium point in expectation in the abstract system.

Broadly, our approach is to abstract a $\pphs$ to a finite Markov Decision Process ($\mdp$) with edge weights, and calculate the expected mean payoff of an infinite path under the worst possible policy~\cite{chatterjee2012games,singh1994reinforcement,gimbert2007pure,kvretinsky2017efficient}. 
We show that, if mean payoff of an infinite path of the abstract $\mdp$ is negative, then the $\pphs$ (which has an infinite $\mdp$ semantics) is stable. 
While finding optimal policies for maximum expected mean payoff is computationally expensive, we present a polynomial-time algorithm to compute this maximum expected mean payoff, which suffices for our purpose to deduce stability.
This requires decomposing the $\mdp$ into communicating $\mdp$s, calculating worst case expected mean payoff of a path in each of these $\mdp$s and, combining these weights in a suitable manner to obtain the worst case expected mean payoff of a path of the original $\mdp$~\cite{puterman2014markov,kvretinsky2017efficient,chatterjee2013faster}.

The main contributions of this paper are:
\begin{itemize}
    \item An approach for abstraction of $\pphs$ (which has an infinite $\mdp$ semantics) to a finite $\mdp$ with edge weights, such that probabilistic stability of $\pphs$ can be inferred by checking that the worst case expected mean payoff of a path in the $\mdp$ is negative. 
    \item A polynomial time algorithm to compute maximum expected mean payoff of an infinite path of a finite $\mdp$.
    \item Experimental evaluation on $\pphs$ with varying dimensions and sizes.
\end{itemize}

\section{Preliminaries}\label{sec:prelim}
In this section, we will discuss basic notations and important concepts related to Discrete-time Markov Chain ($\dtmc$), Weighted Discrete-time Markov Chain ($\rdtmc$), Markov Decision Process ($\mdp$), Weighted Markov Decision Process ($\rmdp$), and policies of $\rmdp$.

\subsection{Basic Notations}\label{sec:base-notation}
We denote the set of all natural numbers (excluding $0$) by $\nat$ and the set of all real numbers by $\real$. 
The set of first $n$ natural numbers are denoted by $[n]$.

A distribution over a set $\states$ is a function $d:\states\rightarrow[0,1]$ such that $\sum_{s \in \states} d(s) = 1$, where we assume that the support of $d$, denoted $\support{d} = \{ s \,|\, d(s) > 0\}$, is countable. $d(A|B)$ denotes the probability of event $A$ given $B$, that is, $\frac{d(A\intersect B)}{d(B)}$ (assuming $d(B)\neq 0$). For a real valued function $f:\states\rightarrow\real$, expectation of $f$ under distribution $d$, i.e., $\sum_{s\in\states}f(s)d(s)$, is denoted as $\avg^d[f]$ ($E[f]$ when $d$ is understood from the context).
%The set of all points to which a distribution $d$ admits non-zero probability is denoted as $\support{d}$. Let $\dist{\states}$ be the set of all distributions on the set $\states$. 
$\dist{\states}$ denotes the set of all distributions on the set $\states$.

For a vertor $x=(x[1],\dots,x[n])\in\real^n$, $\norm{x}{\infty}$ denotes the infinite norm of $x$, that is, $\max_{i\in[n]}\size{x[i]}$. 
For $x,y\in\real^n$, distance of the point $x$ from point $y$ is given by $\norm{x-y}{\infty}$ and denoted as $\metric(x,y)$. 

\subsection{Markov Decision Process}
 A Markov Decision Process ($\mdp$) is an abstract model with a set of states $\states$ and a set of actions $\acts$, that selects a distribution from $\dist{\states}$ based on the current state $s\in\states$ and the current action $a\in\acts$ chosen at $s$.
\begin{definition}
[MDP] A Markov Decision Process ($\mdp$) is a tuple $\M=(\states,\acts,\trans)$ such that
\begin{itemize}
    \item $\states$ is a nonempty set of states
    \item $\acts$ is a nonempty set of actions
    \item $\trans:\states\times\acts\rightarrow\dist{\states}$ is a mapping from the set $\states\times\acts$ to the set of all distributions on $\states$.  
\end{itemize}
\end{definition}
\begin{comment}
For each $s\in\states$, $\acts(s)$ denotes the available set of actions at $s$, i.e., $\acts(s)=\{a\in\acts\mid \exists\alpha\in\dist{\states}\text{ such that }(s,a,\alpha)\in\trans\}$. 
An $\mdp$ is non-blocking if $\acts(s)\neq\emptyset$ for all $s\in\states$. We will assume $\mdp$s to be non-blocking for this work.
\end{comment}
At any state $s$, an action is chosen non-deterministically from $\acts$.
We use $\trans(s,a,s')$ to denote the probability of going from state $s$ to state $s'$ when action $a$ is chosen, i.e., $\trans(s,a,s')=\alpha(s')$ where $\alpha=\trans(s,a)$.

A finite path of an $\mdp$ $\M$ is an alternating sequence of states and actions, $\Path = s_0,a_1,s_1,a_2,s_2,\dots,s_n$ such that for each $0<i\leq n$, $a_i\in\acts$ and $\trans(s_{i-1},a_i,s_i)>0$. 
We say $n$ is the size (denoted $\size{\Path}$), $s_n$ is the ending state (denoted $\pathend{\Path}$) and $s_0$ is the starting state (denoted $\Path_0$) of the path $\Path$. 
A state $s_2$ is said to be reachable from $s_1$ (denoted $s_1\reach s_2$) if there is a finite path $\Path$ such that $\Path_0=s_1$ and $\pathend{\Path}=s_2$.
$\Path_i$ denotes the $i^{th}$ state $s_i$ and 
$\Path[i:j]$ ($0\leq i\leq j\leq n$) denotes the subpath $s_i,a_{i+1},s_{i+1},\dots,s_j$ of the path $\Path$. 
\begin{comment}
For paths $\Path = s_0,a_1,s_1,\dots,s_n$ and $\Path' = s'_0,a'_1,s'_1,\dots$, we define concatenation of $\Path$ and $\Path'$ as the path $\Path\Path' = s_0,a_1,s_1,\dots,s_n,a'_1,s'_2,\dots$, assuming $s_n=s'_0$.
\end{comment}
We say a path $\Path$ is an edge if $\size{\Path}=1$ and infinite if $\size{\Path}=\infty$. 
An edge $e$ is reachable from a state $s$ if $s\reach e_0$.
The set of all edges, finite paths and infinite paths of an $\mdp$ $\M$ are denoted by $\edgeset_\M$ ($\edgeset$ when $\M$ is understood from the context), $\fpath{\M}$ and $\ipath{\M}$ respectively.

We assume that for an $\mdp$ $\M$, the next action is determined based on the current history, i.e., the finite path that has been observed until the most recent time point.
Given any finite path of an $\mdp$ $\M$, a policy is a function that determines the next action.
\begin{definition}[Policy]\label{def:policy}
 A policy $\policy:\fpath{\M}\rightarrow\acts$ on an $\mdp$ $\M$ is a function from the set of finite paths $\fpath{\M}$ to the set of actions $\acts$. 
\end{definition}
A policy is said to be memoryless if the next action is determined based on the current state only. 
Note that, given a memoryless policy $\policy$ for an $\mdp$ $\M$, the probability of transition from state $s$ to $s'$ is uniquely given by $\trans(s,\policy(s),s')$. 
We say a discrete-time Markov chain ($\dtmc$)~\cite{das2022stability} is an $\mdp$ with an associated memoryless policy such that, probability of transition between any two states is uniquely defined.

The set of all possible policies of an $\mdp$ $\M$ is denoted by $\polset_\M$. We abuse notation and write $\polset$ when $\M$ is understood from the context. Given an $\mdp$ $\M$, a policy $\policy$ and an initial distribution $d\in\dist{\states}$ on the set of states $\states$, we define probability of a finite path $\Path$ inductively as~\cite{gimbert2007pure},
\[
\trans_\policy(\Path) = 
\begin{cases}
d(\Path_0) \text{ if }\size{\Path}=0\\
\trans_\policy(\Path')\cdot\policy(\Path')\trans(s_{\size{\Path}-1},\policy(\Path'),s_{\size{\Path}}) \text{ otherwise},
\end{cases}
\]
where $\Path' = \Path[0:\size{\Path}-1]$. 
For this work, we will assume that the initial distribution of states of an $\mdp$ is an indicator function for a unique state $\initst$ known as the {\it initialization point}, i.e., $d(s)=1$ iff $s=\initst$.
\begin{comment}
The definition of probability for finite paths can be extended to define a probability space $(\ipath{\Mw},\sigal,\prob_\policy)$ over all infinite paths of an $\mdp$~\cite{gimbert2007pure,Baier2008}. Note that, the $\sigma$-algebra $\sigal$ is generated by cylinders of finite paths~\cite{Baier2008} and $\prob_\policy$ is a policy dependent probability measure that defines the probability of any set of infinite paths measurable in $\sigal$.
\end{comment}

\subsection{Weighted Markov Decision Process}

\begin{figure}[h]
	\centering
	\setlength\abovecaptionskip{-0pt}
	\setlength\belowcaptionskip{-5pt}
	\includegraphics[width=10cm]{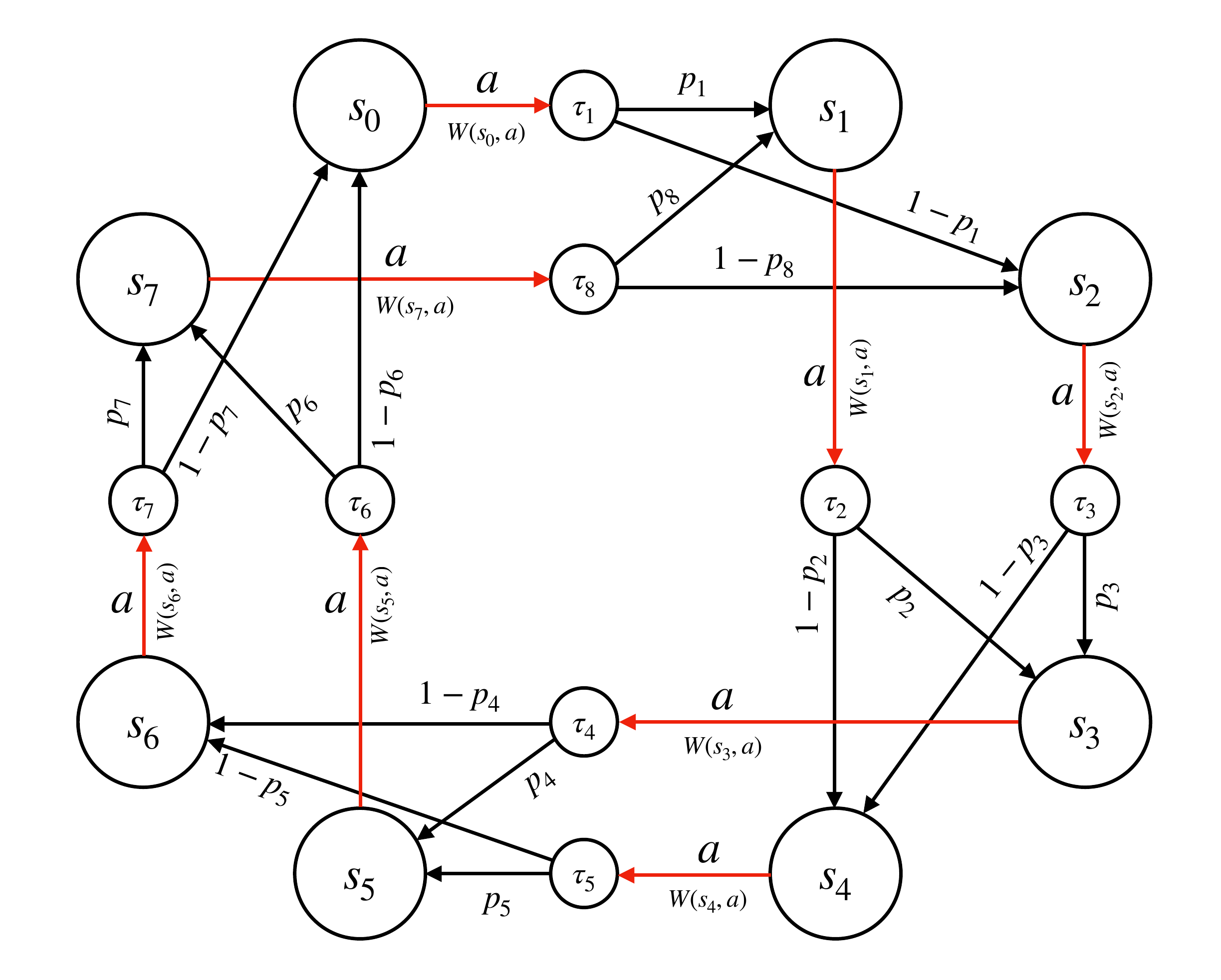}
	\caption{Graphical depiction of a sample MDP}
	\label{fig:mdp}
\end{figure}

We extend $\mdp$ to Weighted $\mdp$ ($\rmdp$) by associating a weight to each possible edge. 
Basically, a $\rmdp$ can be observed as a Rewardful $\mdp$, where we gain weights instead of rewards after each action. 
\begin{definition}
[Weighted $\mdp$] A Weighted $\mdp$ ($\rmdp$) is a tuple $\Mw=(\states,\acts,\trans,\weight)$ where $\M=(\states,\acts,\trans)$ is an $\mdp$ and $\weight:\states\times\acts\rightarrow\real$ is a function associating a real weight to each state-action pair.
\end{definition}

\begin{example}\label{example:sample-mdp}
Let us graphically illustrate a sample $\rmdp$ $\Mw$ where $\states=\{s_0,\dots,s_7\}$, $\acts=\{a\}$ and $\trans(s_{i-1},a)=\tau_{i}\in\dist{\states}$ for all $i\in[8]$ where,
\[
\tau_i(s) = 
\begin{cases}
    p_i \quad\text{if }i\Mod{2}=0 \text{ and }s = s_{(i+1)\Mod{8}}\\
    1-p_i \quad\text{if }i\Mod{2}=0 \text{ and }s = s_{(i+2)\Mod{8}}\\
    p_i \quad\text{if }i\Mod{2}=1 \text{ and }s = s_{i\Mod{8}}\\
    1-p_i \quad\text{if }i\Mod{2}=1 \text{ and }s = s_{(i+1)\Mod{8}}\\
    0 \quad\text{otherwise.}
\end{cases}
\]
We depict $\Mw$ in Figure~\ref{fig:mdp} as a graph where $s_i$, $\tau_j$ are nodes of the graph, there is a non-deterministic edge from $s_i$ to $\tau_j$ (marked red) if $\trans(s_i,a)=\tau_j$ and, there is a probabilistic edge from $\tau_j$ to $s_i$ if $\tau_j(s_i)\neq 0$. Each non-deterministic edge $(s_i,\tau_j)$ is labeled with the action $a$ and the weight of the edge $\weight(s_i,a)$. Each probabilistic edge is labeled with its probability.\qed
\end{example}

Note that, under a memoryless policy $\policy$, a $\rmdp$ not only gives a unique probability, but also gives a unique weight of transition from state $s$ to $s'$ (given by $\weight(s,\policy(s))$).
We say a Weighted $\dtmc$ ($\rdtmc$) is a $\rmdp$ associated with a memoryless policy such that, probability and weight of transition between any two sates are uniquely defined.

The weight of a finite path $\Path=s_0,a_1,s_1,\dots,s_n$ of an $\rmdp$ $\Mw$, denoted $\weight(\Path)$, is the sum of weights of all edges that appear on that path, i.e., 
\[\weight(\Path) = \sum_{i=1}^{n}\weight(s_{i-1},a_i).\]
\begin{comment}
We say a $\rmdp$ is stable if under any policy, expected weight of an infinite path is $-\infty$. The justification behind this definition has been discussed in~\cite{das2022stability}.
\begin{definition}[Stability of WMDP]\label{def:eps-conv}
 A $\rmdp$ $\Mw$ is said to be stable if
for any policy $\policy$, expected weight of an infinite path is $-\infty$, i.e., 
    \[\avg_\policy[\weight] = -\infty\quad\forall\rho\in\polset,\]
where $\avg_\policy[\weight]=\sum_{\Path\in\ipath{\Mw}}\weight(\Path)\prob_\policy(\Path)$.
\end{definition}
\end{comment}
Similarly, Mean payoff of a finite path $\Path$ is the mean weight of all edges appearing on the finite path.
\begin{definition}
[Mean Payoff] For a $\rmdp$ $\Mw$, the mean payoff $\meanpay$ is a function from $\fpath{\Mw}$ to $\real$ defined as,
\[\meanpay(\Path) = \frac{\weight(\Path)}{\size{\Path}}.\]
\end{definition}

\subsection{Maximum Expected Mean Payoff of WMDP}
Given a $\rmdp$, we are interested in finding the maximum expected mean payoff of an infinite path under any policy. 
Here, we formally state this as a problem and discuss an efficient algorithm for its solution.
\begin{problem}\label{prob:max-expt-payoff}
Given a $\rmdp$ $\Mw$, find the maximum expected mean payoff of an infinite path under any policy, i.e., 
\[
\sup_{\policy\in\polset}\left(\limsup_{n\rightarrow\infty}\avg^{\trans^\policy_n}[\meanpay]\right),
\]
where $\trans^\policy_n$ is a distribution on $\fpath{\Mw}$ given by $\trans^\policy_n(\Path)=\trans_\policy(\Path\mid \size{\Path}=n)$.
\end{problem}
We denote $\limsup_{n\rightarrow\infty}\avg^{\trans^\policy_n}[\meanpay]$ as $\avg_\policy[\meanpay]$. 
Now, let us briefly discuss an algorithm for solving the above problem.

\subsubsection{Algorithm for calculating maximum expected mean payoff:}\label{algo:lp-gain}
Our goal is to solve Problem~\ref{prob:max-expt-payoff} for a $\rmdp$ with finite set of states and actions (finite $\rmdp$).
\begin{comment}
i.e., given $\rmdp$ $\Mw$, we want to find 
\[
\sup_{\policy\in\polset}\left(\limsup_{n\rightarrow\infty}\avg^{\trans^\policy_n}[\meanpay]\right).
\]
\end{comment}
Algorithm 2 (MEC-SI) from~\cite{kvretinsky2017efficient} solves a similar problem for the class of $\rmdp$s with strictly positive weights and the $\limsup$ in the expected mean payoff replaced with $\liminf$, that is, it computes $\sup_{\policy\in\polset}\left(\liminf_{n\rightarrow\infty}\avg^{\trans^\policy_n}[\meanpay]\right)$.
We want to use this algorithm to solve Problem~\ref{prob:max-expt-payoff}. In order to do that, the input $\rmdp$ must be suitably modified to suit the input criterion of Algorithm MEC-SI and, the desired output should be derivable from the output of the algorithm. 
Note that, $\limsup$ of a real sequence can be found by negating the $\liminf$ of the inverted sequence. 
Thus, if we negate all weights of the input $\rmdp$ and find $\sup_{\policy\in\polset}\left(\liminf_{n\rightarrow\infty}\avg^{\trans^\policy_n}[\meanpay]\right)$, we are actually finding the maximum expected payoff of the original $\rmdp$.
Also, if we shift the outputs of a real valued function by a constant real bias, the expectation gets shifted by the same bias. 
Thus, solving Problem~\ref{prob:max-expt-payoff} for a $\rmdp$ is the same as solving Problem~\ref{prob:max-expt-payoff} for the $\rmdp$ after shifting each weight by a constant, and then removing the constant. 
Hence, for a $\rmdp$ with both positive and negative weights, we can solve Problem~\ref{prob:max-expt-payoff} by first negating all the weights, then adding a constant bias to each weight to make them all positive, and finally applying Algorithm MEC-SI on the $\rmdp$ with the modified weights.

Let us now briefly describe Algorithm MEC-SI. The main steps of the algorithm are:
\begin{enumerate}
    \item Achieve Maximal End Component (MEC) decomposition~\cite{chatterjee2013faster} of the input $\rmdp$.
    \item For each MEC, find $\sup_{\policy\in\polset}\left(\liminf_{n\rightarrow\infty}\avg^{\trans^\policy_n}[\meanpay]\right)$ for the induced $\rmdp$ by strategy iteration.
    \item Construct the MEC-quotient (an $\mdp$) using the values obtained in the previous step~\cite{kvretinsky2017efficient}.
    \item Find the maximum reaching probability to a particular state of the MEC-quotient~\cite{kvretinsky2017efficient}.
\end{enumerate}
Note that, the algorithm works in polynomial time~\cite{kvretinsky2017efficient,chatterjee2013faster} if we can solve steps 2 and 4 in polynomial time. 
Note that, step 2 cannot be done in polynomial time as strategy iteration is not guaranteed to converge in polynomial time. 
Also note that, we don't actually need to synthesize an optimal strategy, rather, we only need the optimal gain $\sup_{\policy\in\polset}\left(\liminf_{n\rightarrow\infty}\avg^{\trans^\policy_n}[\meanpay]\right)$ of the MEC. 
Let us show how we can do this in polynomial time. 
If a $\rmdp$ is finite and strongly connected (communicating), that is, each state is reachable from every other state, then there is a linear program (LP) formulation for the optimal gain problem~\cite{puterman2014markov}. 
Since MECs are strongly connected~\cite{kvretinsky2017efficient}, we can obtain optimal gain of an MEC by solving an LP. 
Since an LP can be solved in polynomial time, step 2 can actually be completed in polynomial time as well. 

If an $\mdp$ is finite, maximum reaching probability to a state can be solved by solving an LP~\cite{puterman2014markov}. 
Since MEC-quotient is a finite $\mdp$~\cite{kvretinsky2017efficient}, we can use this LP formulation for step 4. 
Thus, step 4 can also be completed in polynomial time.
So, we have a polynomial-time algorithm for computing maximum expected mean payoff of a $\rmdp$.

\begin{example}
Let us apply our algorithm on the sample $\rmdp$ described in Example~\ref{example:sample-mdp}. First, we negate each weight, i.e., the modified weight of a state-action pair $(s_i,a)$ becomes $-\weight(s_i,a)$. Now, if $\min_{i\in[8]}(-\weight(s_{i-1},a))\leq0$, then we set the constant $c\gets\size{\min_{i\in[8]}(-\weight(s_{i-1},a))}+1$ and $c\gets 0$ otherwise. We add $c$ to each of the modified weights. Thus, the final weight of a state-action pair $(s_i,a)$ becomes $-\weight(s_i,a)+c$, which is strictly positive. We now apply Algorithm MEC-SI on the $\rmdp$ with the modified weights. Note that, the $\rmdp$ is strongly connected, i.e., it has only one MEC (see~\cite{kvretinsky2017efficient}). Thus, we can skip steps 3 and 4 altogether. The maximum expected mean payoff of the sample $\rmdp$ is simply $-(v-c)$, where $v$ is the value obtained from step 2 by solving the linear program for the $\rmdp$ with modified weights.
\end{example}

\section{Polyhedral Probabilistic Hybrid Systems}

%\begin{comment}
\begin{figure}[ht]
	\centering
	\setlength\abovecaptionskip{-0pt}
	\setlength\belowcaptionskip{-5pt}
	\includegraphics[width=10cm]{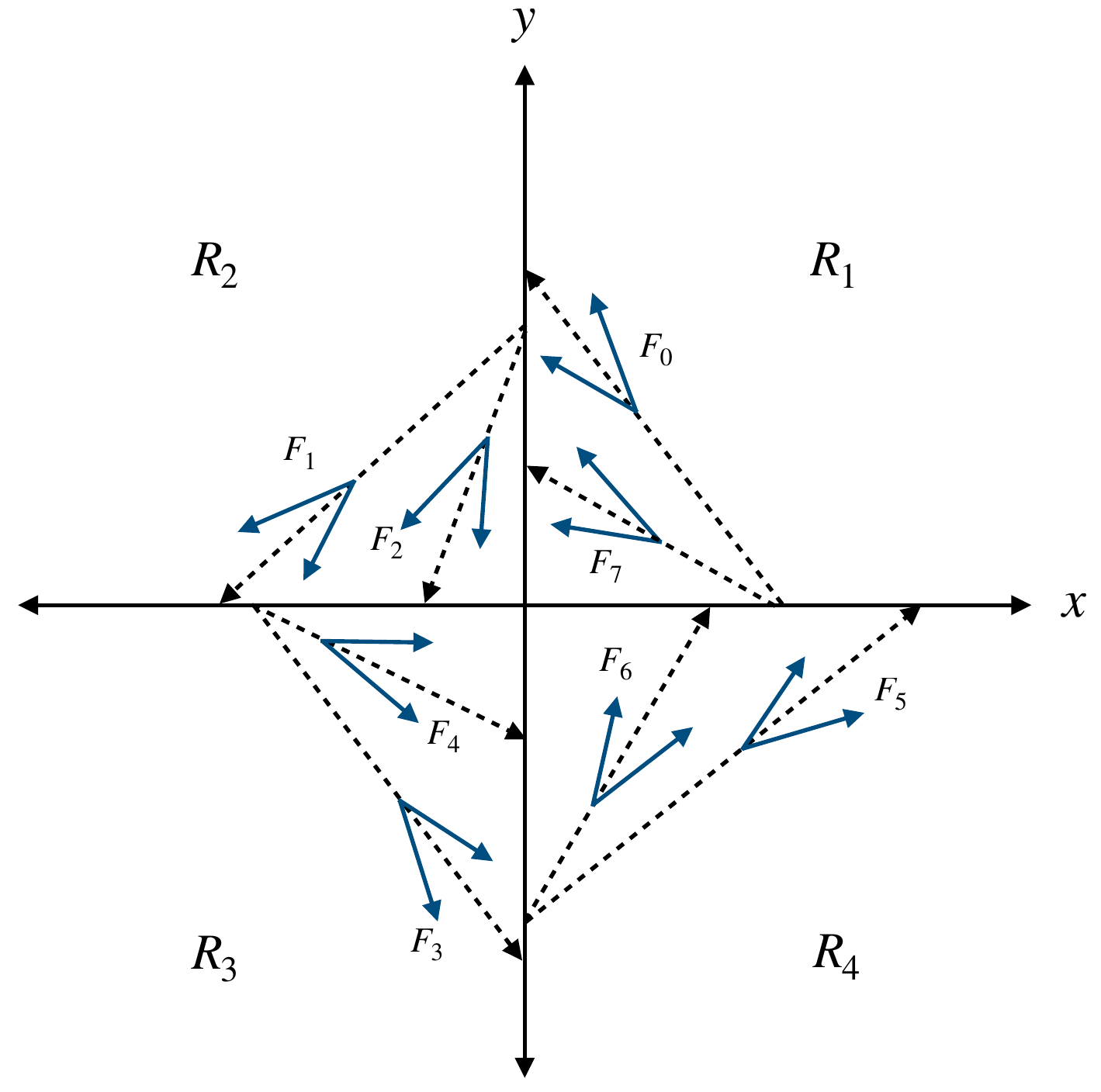}
	\caption{A sample PPHS}
	\label{fig:phs}
\end{figure}
%\end{comment}

\begin{figure}[ht]
	\centering
	\setlength\abovecaptionskip{-0pt}
	\setlength\belowcaptionskip{-5pt}
    \includegraphics[width=12cm]{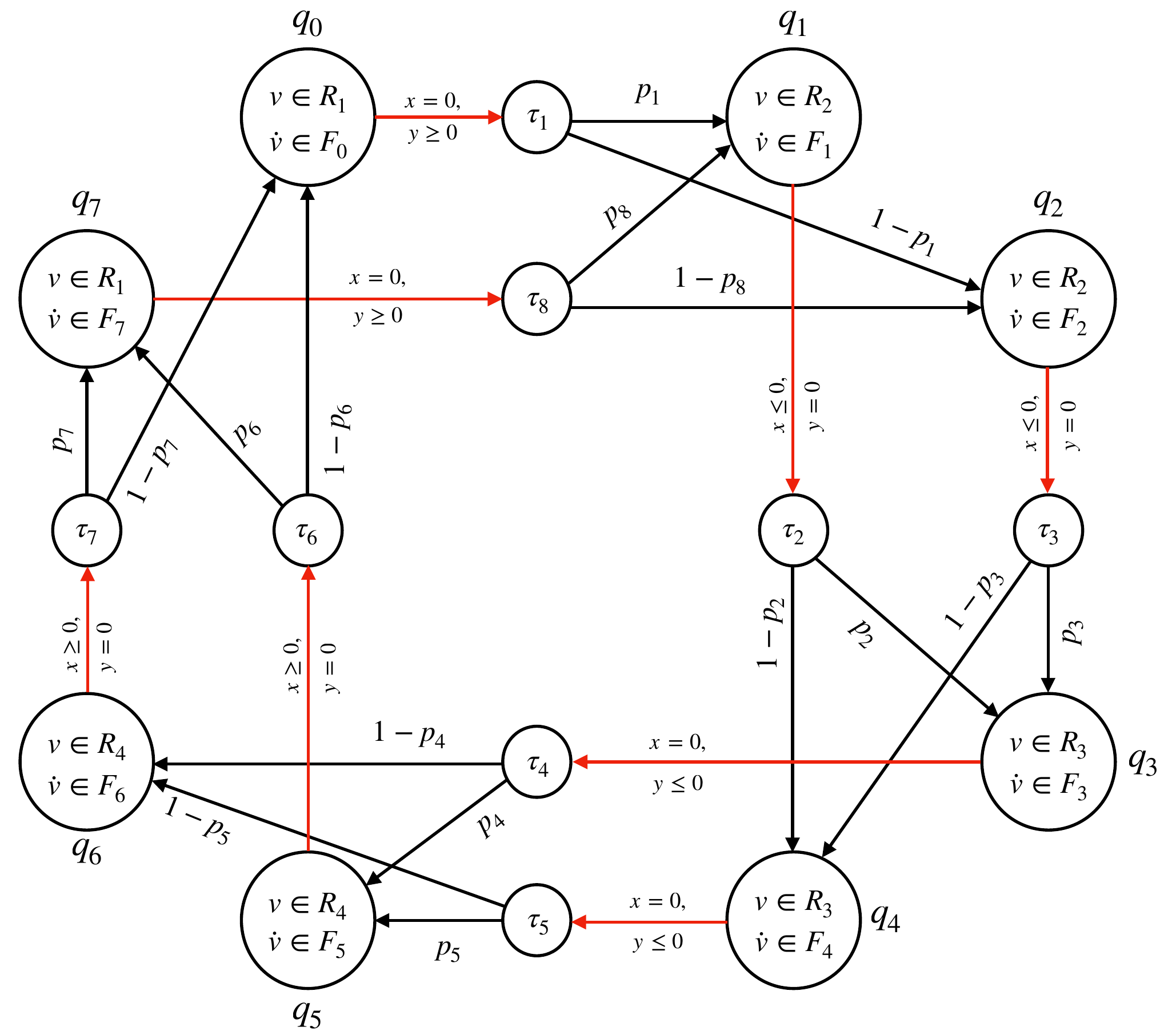}
	\caption{Graphical representation of sample PPHS}
	\label{fig:phs-graph}
\end{figure}

In this section, we define Polyhedral Probabilistic Hybrid System ($\pphs$) and associate a notion of stability to $\pphs$.

\begin{definition}
	[$\pphs$]\label{def-pphs}
	The Polyhedral Probabilistic Hybrid System ($\pphs$) is defined as the tuple 
	$\calH\defn(\loc,\calX,\inv,\flow,\edges)$ where,
	\begin{itemize}
		\item $\loc$ is the set of discrete locations,
		\item $\calX = {\real}^n$ is the continuous state space for
		some $n\in\nat$,
		\item $\inv: \loc\rightarrow\poly{n}$ is the invariant
                  function which assigns a positive scaling invariant polyhedral subset of the state space to each location $q\in\loc$,
		\item $\flow: \loc\rightarrow \poly{n}$ is the Flow function which 
		associates a flow polyhedron to each location $q\in \loc$
		\item $\edges \subseteq
                  \loc \times(\union_{q\in\loc}\face(\inv(q)))\times\dist{Q}$
                  is the probabilistic edge relation such that $(q, f,
                  \zeta)\in\edges$ where for every $(q, f)$, there is a
                  at most one $\zeta$ such that $(q, f, \zeta)  \in
                  \edges$ and $f\in\face(\inv(q))$.
                  $f$ is called a \emph{Guard} of the location $q$.
	\end{itemize}
\end{definition}

Let us describe the semantics of the $\pphs$.
An execution starts from $(q_0,x_0)$ where $q_0\in\loc$ and $x_0\in\inv(q_0)$. It evolves continuously for
some time $T$ according to a flow rate that is chosen non-deterministically from $\flow(q_0)$, until it reaches a facet $f_0$ of $\inv(q_0)$.
Then a probabilistic discrete transition is taken if there is an edge
$(q_0, f_0, \zeta_0)$ and the state $q_0$ is probabilistically changed
to $q_1$ with probability $\zeta_0(q_1)$.
The execution (tree) continues with alternating continuous and
discrete transitions.

Formally, for $x_1, x_2\in\calX$ and $q \in
\loc$, we say that there is a \emph{continuous transition} from $x_1$
to $x_2$ with respect to  $q$ if $x_1,x_2\in\inv(q)$, there exists
$R\geq 0$ and $f\in\flow(q)$ such that $x_2=x_1+f\cdot R$, $x_1 + \flow(q) \cdot t
\in \inv(q)$ for all $0 \leq t < R$ and $x_2 \in
\boundary{\inv(q)}$.
If for all $t\geq 0$, $x_1+\flow(q)\cdot t\in\inv(q)$ then we
say $x_1$ has an infinite edge with respect to $q$.
For two locations $q_1, q_2\in\loc$, we say there is a \emph{discrete
transition} from $q_1$ to $q_2$ with probability $p$ via $\zeta\in\dist{\loc}$ and
$f\in\face(q_1)$ if  $f\subseteq\inv(q_2)$, $(q_1,f,\zeta)\in\edges$
and $p = \zeta(q_2)$.

\begin{example}\label{example:phs}
Let us illustrate $\pphs$ using an example. 
Let $\{q_{i-1}\mid i\in[8]\}$ be the set of discrete locations and $\calX=\real^2$ be the continuous state space for a $\pphs$ $\calH$. 
The four quadrants $R_1$, $R_2$, $R_3$ and $R_4$ are the invariant regions with $R_1$ associated to $q_0$ and $q_7$, $R_2$ associated to $q_1$ and $q_2$, $R_3$ associated to $q_3$ and $q_4$ and, $R_4$ associated to $q_5$ and $q_6$. 
The rate of change of continuous state (flow rate) at location $q_i$ is non-deterministically chosen from the polyhedron $F_i\subseteq\real^2$. 
For each location, either positive $X$, or positive $Y$, or negative $X$, or negative $Y$ axis serves as the guard, that is, the system probabilistically jumps to a new location once the continuous state reaches the guard. 
For example, if the continuous state of the system becomes $(0,y)$ for some $y\in\real$ when the system is in $q_0$, then the system will change its location to either $q_1$ or $q_2$ probabilistically, since positive $Y$ axis is the guard of $q_0$. 
We illustrate the possible evolutions of the system in Figure~\ref{fig:phs}.

We also provide a graphical depiction of the $\pphs$ in Figure~\ref{fig:phs-graph}. 
Note that, each location $q_i$ is labeled with $v\in R_j,\dot{v}\in F_i$, which implies that the continuous state $v$ belongs to the the invariant $R_j$ and rate of change of the continuous state $\dot{v}$ belongs to the flow polyhedron $F_i$ associated to the location.
Each non-deterministic edge outgoing from a location $q_i$ (marked red) is labeled with a guard set and leads to a probability distribution $\tau_{i+1}$ on $\loc$. Probabilistic edges are directed from $\tau_i$ to the next possible locations and labeled with the probability of the target location under $\tau_i$.\qed
\end{example}

We capture the semantics of a $\pphs$ using a $\rmdp$, where continuous transitions are analogous to non-deterministic actions and discrete transitions are equivalent to probabilistic change of state. To reason
about convergence, we need to capture the relative distance of
the states from the equilibrium point, which is captured using edge weights.
Let us fix $0$ as the equilibrium point for the rest of the
section.
The weight on a transition from $(q_1, x_1)$ to $(q_2, x_2)$ captures
the logarithm of the relative distance of $x_1$ and $x_2$ from $0$,
that is, $\log\left(\norm{x_2}{\infty}/\norm{x_1}{\infty}\right)$,
where $\norm{x}{\infty}$ captures the distance of state $x$ from $0$.

\begin{definition}
	[Semantics of $\pphs$]
	Given a $\pphs$ $\calH$, we can construct the $\rmdp$
        $\semantics{\calH}\defn(\states_\calH,\acts_\calH,\trans_\calH,\weight_\calH)$
        where, 
	\begin{itemize}
		\item $\states_\calH=\loc\times\calX$
		\item $\acts_\calH\subseteq \states_\calH\times\states_\calH$ with $((q_1,x_1),(q_2,x_2))\in\acts$ iff $q_1=q_2$ and there is a continuous transition from $x_1$ to $x_2$ with respect to $q_1$.
		\item $\trans_\calH((q_2,x_2)| (q_1,x_1),a)=\zeta(q_2)=p$ if $a=((q_1,x_1),(q_1,x_2))$, there is a discrete transition from $q_1$ to $q_2$ with probability $p$ via $\zeta$ and $f$, and $x_2\in f$. In all other cases, $\trans_\calH((q_2,x_2)| (q_1,x_1),a)=0$.
		\item $\weight_\calH((q_1,x_1),a)=\log\left(\norm{x_2}{\infty}/\norm{x_1}{\infty}\right)$ where $a=((q_1,x_1),(q_1,x_2))$.
	\end{itemize}
\end{definition}

An infinite path $\Path=(q_0,x_0),((q_0,x_0),(q_0,x_1)),(q_1,x_1),\dots$ of the semantics $\rmdp$ $\semantics{\calH}$ is said to converge to $0$ if $\lim_{n\rightarrow\infty}\norm{x_n}{\infty}=0$. 
Thus, $\Path$ converges if and only if $\weight_\calH(\Path)=-\infty$, if and only if $\limsup_{n\rightarrow\infty}\frac{\weight_\calH(\Path[0:n])}{n}<0$~\cite{das2022stability}.
Thus, $\Path$ converges if and only if mean payoff of $\Path$ is less than zero. 
% Our definition of stability is similar to Definition~\ref{def:asymp-stab} (asymptotic stability), that is,
We say a $\rmdp$ is stable if an infinite path of it converges in expectation under any policy. A $\pphs$ is said to be stable when its semantics $\rmdp$ is stable.
\begin{definition}
    [Stability of PPHS]\label{def:stab-pphs} A $\rmdp$ $\Mw$ is said to be stable if under any policy $\policy$, an infinite path of it converges in expectation, i.e.,
    $\avg_\policy[\meanpay] < 0$.
    A $\pphs$ $\calH$ is said to be stable when its semantics $\rmdp$ $\semantics{\calH}$ is stable.
\end{definition}
For a $\rmdp$, an infinite path converges in expectation under all policies, if and only if, maximum expected mean payoff of an infinite path is less than zero. Thus, we have the following characterization for stability of a $\pphs$,
\begin{theorem}\label{thm:char-stab}
[Characterization of Stability] A $\pphs$ $\calH$ is stable iff, maximum expected mean payoff of an infinite path of its semantics $\rmdp$ $\semantics{\calH}$ is strictly negative, i.e.,
\[
\sup_{\policy\in\polset}\avg_\policy[\meanpay]<0.
\]
\end{theorem}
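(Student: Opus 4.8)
The statement unwinds, via Definition~\ref{def:stab-pphs}, into an equivalence about the semantics WMDP $\semantics{\calH}$: ``$\avg_\policy[\meanpay]<0$ for every policy $\policy\in\polset$'' versus ``$\sup_{\policy\in\polset}\avg_\policy[\meanpay]<0$''. One direction is immediate and I would dispatch it first: if $\sup_{\policy\in\polset}\avg_\policy[\meanpay]<0$, then for each individual $\policy$ we have $\avg_\policy[\meanpay]\le\sup_{\policy'\in\polset}\avg_{\policy'}[\meanpay]<0$, so $\semantics{\calH}$ is stable and hence so is $\calH$. The whole content of the theorem is therefore in the converse, and the subtlety there is the \emph{strict} inequality: a supremum of strictly negative reals can equal $0$, so one cannot simply pass to the limit of near-optimal policies.

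For the converse, the plan is to show that the supremum $\sup_{\policy\in\polset}\avg_\policy[\meanpay]$ is actually \emph{attained} by some policy $\optpol$; then stability of $\calH$ forces $\avg_{\optpol}[\meanpay]<0$ (as $\optpol$ is in particular a policy), and hence $\sup_{\policy\in\polset}\avg_\policy[\meanpay]=\avg_{\optpol}[\meanpay]<0$, which is exactly what is wanted. To get attainment I would exploit the positive-scaling invariance built into the $\pphs$: for every $\lambda>0$, the map $(q,x)\mapsto(q,\lambda x)$ is a weight-preserving automorphism of $\semantics{\calH}$ — a continuous transition from $x_1$ to $x_2$ with respect to $q$ lifts (with flow $f$ unchanged and dwell time $R$ scaled to $\lambda R$) to one from $\lambda x_1$ to $\lambda x_2$ with the same weight $\log(\norm{x_2}{\infty}/\norm{x_1}{\infty})$, the faces of the cone $\inv(q)$ are themselves scaling invariant, and the probability distributions $\zeta$ on $\loc$ are untouched. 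Since $\calH$ also has only finitely many locations and guards, this is precisely the structure that allows $\semantics{\calH}$ to be captured, as far as the worst-case expected mean payoff is concerned, by a \emph{finite} WMDP $\M$ (the weighted abstraction developed in the following section, in the spirit of~\cite{das2022stability}). On a finite WMDP the mean-payoff objective admits an optimal \emph{memoryless} policy~\cite{puterman2014markov,gimbert2007pure}, so the supremum over $\M$ is a maximum; pulling that optimal policy back through the abstraction produces the desired $\optpol\in\polset$ for $\semantics{\calH}$ with $\avg_{\optpol}[\meanpay]=\sup_{\policy\in\polset}\avg_\policy[\meanpay]$.

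The hard part is exactly this attainment step, i.e., bridging from the infinite-state semantics $\semantics{\calH}$ — whose edge weights $\log(\norm{x_2}{\infty}/\norm{x_1}{\infty})$ take a continuum of values, so that neither the achievable mean-payoff values nor the set of policies is finite — to a finite setting where ``$\sup$'' can be replaced by ``$\max$''. I expect it to rest entirely on two ingredients: soundness \emph{and} completeness of the weighted finite abstraction for the worst-case expected mean payoff (soundness giving that the abstract optimum is $\ge$ the concrete one, completeness the reverse), and the transfer of an optimal positional strategy from the finite abstract MDP back to $\semantics{\calH}$ without loss of value. Once attainment is in hand, the remaining reasoning — the trivial direction above and the one-line deduction from stability of $\calH$ — is routine, so I would present the abstraction-soundness/completeness machinery as the load-bearing lemmas and keep the proof of Theorem~\ref{thm:char-stab} itself short.
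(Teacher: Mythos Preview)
The paper's own proof is a one-liner: it simply asserts that ``$\avg_\policy[\meanpay]<0$ for all $\policy$'' is equivalent to ``$\sup_\policy\avg_\policy[\meanpay]<0$'' and concludes. It does \emph{not} address the attainment issue you correctly flag (a supremum of strictly negative reals can be $0$). So your approach is not the paper's; you are attempting to fill a gap the paper ignores.

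That said, your proposed fix has two concrete problems. First, circularity: the abstraction $\reduce{\calH}$ you want to invoke is introduced \emph{after} Theorem~\ref{thm:char-stab}, and the soundness result for it (Theorem~\ref{thm:eps-abs-conc}) explicitly appeals to Theorem~\ref{thm:char-stab} in its last line; you cannot turn around and use that machinery here. Second, and more seriously, the ``completeness'' half you need is not available for the paper's abstraction and is in general false. The abstract edge weight is $\reduce{\weight}((q,f_1),\cdot)=\max\{\weight_\calH((q,x_1),a')\}$ over $x_1\in f_1$, $x_2\in f_2$; this maximum is realised at particular points $x_1^*,x_2^*$, but when the face $f_2$ has dimension $\geq 2$ there is no reason the maximiser $x_2^*$ for one abstract edge coincides (even up to scaling) with the maximiser $x_1^*$ for the next. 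Hence an optimal positional policy on $\reduce{\calH}$ need not pull back to a concrete policy on $\semantics{\calH}$ achieving the same value, and your claimed equality $\avg_{\optpol}[\meanpay]=\sup_{\policy}\avg_\policy[\meanpay]$ is unsupported. If you want to rescue the argument, you would have to bypass $\reduce{\calH}$ and argue attainment directly on $\semantics{\calH}$ (e.g., quotient by positive scaling and use compactness of the resulting state/action spaces together with continuity of the weight), which is a different and substantially harder lemma than anything in the paper.
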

\begin{proof}
A $\pphs$ $\calH$ is stable when its semantics $\rmdp$ $\semantics{\calH}$ is stable. Now,
    $\semantics{\calH}$ is stable if and only if
    $\avg_\policy[\meanpay] < 0$ $\forall\policy\in\polset$, if and only if
    $\sup_{\policy\in\polset}\avg_\policy[\meanpay]<0$.
Hence, our claim is proved.\qed
\end{proof}
However, the algorithm discussed in Section~\ref{algo:lp-gain} cannot be applied to $\semantics{\calH}$ since it has infinite state and action space. 
Hence, we create an abstract $\rmdp$ $\reduce{\calH}$ from $\semantics{\calH}$ which is finite in size and, stability of $\reduce{\calH}$ implies stability of $\semantics{\calH}$.

\begin{remark}
Different notions of stability, such as, exponential stability, Lyapunov stability, Lagrange stability, asymptotic stability, have been explored for non-probabilistic hybrid systems. 
These notions have been extended for $\shs$~\cite{teel2014stability} by defining them on expected behavior of the system. 
For example, an $\shs$ is said to be Lyapunov stable around a point of equilibrium when the system remains within a close neighborhood of the equilibrium point \emph{in the expected case}, when it starts from a point close to the equilibrium point. 
We can say that, our notion of stability extends the notion of asymptotic stability in the non-stochastic setting, since we obtain the definition of asymptotic stability in non-stochastic setting by removing the `in expectation' part from Definition~\ref{def:stab-pphs}.
\end{remark}

\section{Abstraction of PPHS to WMDP}
We now describe the abstract $\rmdp$ $\reduce{\calH}$ derived from the semantics $\rmdp$ $\semantics{\calH}$ of a $\pphs$ $\calH$, and show that, stability of $\reduce{\calH}$ implies stability of $\semantics{\calH}$.
\begin{definition}
	[Abstract $\rmdp$] Let $\calH$ be a $\pphs$ and $\semantics{\calH}$ be its 
	semantics. We define the $\rmdp$ 
	$\reduce{\calH}=(\reduce{\states},\reduce{\acts},\reduce{\trans},\reduce{\weight})$ as follows,
	\begin{itemize}
		\item $\reduce{\states} = \loc\times\bigunion_{q\in\loc}\face(\inv(q))$
		\item $\reduce{\acts}\subseteq\reduce{\states}\times\reduce{\states}$ with $((q_1,f_1),(q_2,f_2))\in\acts$ iff $q_1=q_2$ and there is a continuous transition from $x_1\in f_1$ to $x_2\in f_2$ with respect to $q_1$.
		\item $\reduce{\trans}((q_2,f_2)|(q_1,f_1),a)
            =\trans_\calH((q_2,x_2)|(q_1,x_1),a')$ if $a=((q_1,f_1),(q_1,f_2))$, $x_1 \in
            f_1$, $x_2 \in f_2$ and $\trans_\calH((q_2,x_2)|(q_1,x_1),a')>0$. In all other cases, $\reduce{\trans}((q_2,f_2)|(q_1,f_1),a)=0$.
        \item $\reduce{\weight}((q_1,f_1),((q_1,f_1),(q_1,f_2)))= \max\{\weight_\calH((q_1,x_1),a')\}$, where maximum is taken over all $a'$ of the form $((q_1,x_1),(q_2,x_2))$ such that $x_1\in f_1$ and $x_2\in f_2$.
		\end{itemize}
\end{definition}

Note that, $\reduce{\calH}$ has finite state space and action space since for any $q\in\loc$, $\face(\inv(q))$ is finite. Also, $\max\{\weight_\calH((q_1,x_1),a',(q_2,x_2))\}$,
where maximum is taken over all $((q_1,x_1),a',(q_2,x_2))\in\edgeset_\calH$ such that $x_1\in f_1$ and $x_2\in f_2$, can be calculated by solving linear optimization problems when $\inv(q_1)$ is positive scaling invariant (see~\cite{prabhakar2013abstraction}). 

\begin{example}
For example, let us abstract the $\pphs$ described in Example~\ref{example:phs}. Assuming $(1,0)$ to be the initial state and $q_0$ the initial location, $(q_0,\{x\geq 0,y=0\})$ gives the initialization point for the abstract $\rmdp$. A unique action $((q_0,\{x\geq 0,y=0\}),(q_0,\{x=0,y\geq0\}))$ leads from the initialization point to the distribution $\tau_1$ with non-zero weight. $\tau_1$ has probability $p_1$ for state $(q_1,\{x=0,y\geq0\})$ and probability $1-p_1$ for state $(q_2,\{x=0,y\geq0\})$. Other states and transitions are defined similarly. The resulting $\rmdp$ is depicted by Figure~\ref{fig:mdp}, where $s_0$ marks the initialization point.\qed
\end{example}

We will now prove that $\calH$ is stable if $\reduce{\calH}$ is stable. 
This result will lead to a polynomial time algorithm that can verify stability of $\pphs$.

\begin{theorem}\label{thm:eps-abs-conc}
$\calH$ is stable if $\reduce{\calH}$ is stable.
\end{theorem}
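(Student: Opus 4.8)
The plan is to show that any policy on $\semantics{\calH}$ can be "projected" onto a policy-like object on $\reduce{\calH}$ whose expected mean payoff dominates that of the original, so that $\sup_{\policy}\avg_\policy[\meanpay]$ for $\semantics{\calH}$ is bounded above by the corresponding supremum for $\reduce{\calH}$; stability of $\reduce{\calH}$ (which by Definition~\ref{def:stab-pphs} says this latter supremum is negative) then forces stability of $\semantics{\calH}$ via Theorem~\ref{thm:char-stab}. Concretely, I would first set up the abstraction map $h:\states_\calH\rightarrow\reduce{\states}$ sending $(q,x)$ to $(q,f)$ where $f$ is the face of $\inv(q)$ containing $x$ (states of $\semantics{\calH}$ reached after a discrete transition always lie on a guard face, so $h$ is well-defined on all states occurring along an infinite path after the first step; the initial state is handled separately or assumed to lie on a face). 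An action $a=((q,x_1),(q,x_2))\in\acts_\calH$ maps to $\bar a = ((q,f_1),(q,f_2))\in\reduce{\acts}$ where $f_i = h$-face of $x_i$. The two key facts I need from the construction are: (i) $h$ is a transition-preserving simulation, i.e. $\trans_\calH((q_2,x_2)\mid(q_1,x_1),a) = \reduce{\trans}(h(q_2,x_2)\mid h(q_1,x_1),\bar a)$ whenever the left side is positive, which is immediate from the definition of $\reduce{\trans}$; and (ii) $\reduce{\weight}(h(q_1,x_1),\bar a) \geq \weight_\calH((q_1,x_1),a)$ by definition of $\reduce{\weight}$ as a maximum over all such concrete edges.

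Given these, I would argue as follows. Fix any policy $\policy$ on $\semantics{\calH}$. It induces a probability measure on infinite concrete paths; pushing forward along $h$ gives a distribution on infinite abstract paths, which is realized by some (in general history-dependent, possibly randomized) policy $\bar\policy$ on $\reduce{\calH}$ — here fact (i) guarantees the pushforward respects the abstract transition structure, so $\bar\policy$ is a legitimate policy. For every finite concrete path $\Path$ of length $n$ and its image $\bar\Path = h(\Path)$, fact (ii) gives $\weight_\calH(\Path) \leq \reduce{\weight}(\bar\Path)$, hence $\meanpay(\Path) \leq \meanpay(\bar\Path)$ after dividing by $n$. Taking expectations under the coupled measures (the law of $\bar\Path$ is exactly the pushforward of the law of $\Path$) and then $\limsup_{n\to\infty}$ yields $\avg_\policy[\meanpay] \leq \avg_{\bar\policy}[\meanpay]$. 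Since $\bar\policy\in\polset_{\reduce{\calH}}$, the right side is $\leq \sup_{\bar\policy'}\avg_{\bar\policy'}[\meanpay] < 0$ by stability of $\reduce{\calH}$. As $\policy$ was arbitrary, $\sup_{\policy}\avg_\policy[\meanpay] < 0$ for $\semantics{\calH}$, which by Theorem~\ref{thm:char-stab} is exactly stability of $\calH$.

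The main obstacle I anticipate is the measure-theoretic bookkeeping around randomized/history-dependent abstract policies: the pushforward of a deterministic concrete policy need not be induced by a deterministic abstract policy, because two concrete states with the same abstract image may prescribe actions with different abstract images, forcing randomization at the abstract level. One must check Definition~\ref{def:policy} and the surrounding probability-space setup still accommodate this (or argue that allowing randomized policies does not change $\sup_\policy\avg_\policy[\meanpay]$, which is standard for mean-payoff objectives). A secondary subtlety is the interaction of $\limsup$ with expectation and the coupling — one should be careful that the inequality $\avg_\policy[\meanpay]\leq\avg_{\bar\policy}[\meanpay]$ survives the $\limsup_n$, which it does since $\avg^{\trans^\policy_n}[\meanpay]\leq \avg^{\reduce{\trans}^{\bar\policy}_n}[\meanpay]$ holds for each fixed $n$ and $\limsup$ is monotone. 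Everything else — well-definedness of $h$ on guard faces, finiteness of $\reduce{\calH}$, and the weight inequality — follows directly from the definitions already in the excerpt.
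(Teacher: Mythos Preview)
Your proposal is correct and follows essentially the same route as the paper: construct an abstract policy from an arbitrary concrete one via the face map $h$, use the edge-wise weight domination $\reduce{\weight}\geq\weight_\calH$ to get $\meanpay(\Path)\leq\meanpay(\bar\Path)$ for every $n$, pass to expectations and $\limsup$, and conclude via Theorem~\ref{thm:char-stab}. Your explicit acknowledgment that the induced abstract policy may need to be randomized (and that this does not affect $\sup_\policy\avg_\policy[\meanpay]$ for mean-payoff objectives) is in fact more careful than the paper's own argument, which writes down a deterministic $\hat\policy$ without addressing the case where distinct concrete paths in the same abstract class prescribe actions landing in different faces.
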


\begin{proof}
Let $\policy$ be an arbitrary policy of $\semantics{\calH}$. Suppose, $(q_0,x_0)$ is the initialization point of $\semantics{\calH}$ and $x_0$ belongs to the facet $f_0$. Then, $(q_0,f_0)$ is the initialization point of $\reduce{\calH}$. We say that a state $(q,x)$ of $\semantics{\calH}$ belongs to a state $(q',f')$ of $\reduce{\calH}$, denoted as $(q,x)\in(q',f')$, if $q=q'$ and $x\in f'$. We define a policy $\hat{\policy}$ for $\reduce{\calH}$ using $\policy$ as follows:
\begin{itemize}
    \item Let $\hat{\Path}=(q_0,f_0),((q_0,f_0),(q_0,f_1)),(q_1,f_1),\dots,(q_n,f_n)$ be a finite path of $\reduce{\calH}$ and the action $((q_n,f_n),(q_n,f'))$ $\in\reduce{\acts}$.
    \item Then,
    \begin{align*}
        &\hat{\policy}(\hat{\Path}) = ((q_n,f_n),(q_n,f'))\\
        &\text{iff, }\exists\Path\in\fpath{\semantics{\calH}}, \Path_i\in\hat{\Path}_i\quad \forall i\in[n],\text{ and }x'\in f',
    \end{align*}
    where $\policy(\Path)=((q_n,x_n),(q_n,x'))$.
\end{itemize}
This implies by construction of $\reduce{\calH}$ that, probability of a finite path $\hat{\Path}$ under policy $\hat{\policy}$ is simply the probability (under $\policy$) of set of all finite paths $\Path$ of $\semantics{\calH}$ such that $\size{\hat{\Path}}=\size{\Path}$ and $\Path_i\in \hat{\Path}_i$ for all $i$. 
Also note that, $\reduce{\weight}(\hat{\Path})\geq\weight_\calH(\Path)$ for any such $\Path\in\fpath{\semantics{\calH}}$. 
In fact, for $\hat{\Path}\in\fpath{\reduce{\calH}}$, let 
\[
\Pi(\hat{\Path}) = \{\Path\in\fpath{\semantics{\calH}}:\Path_i\in\hat{\Path}_i\text{ }\forall i\};
\]
then, for each $\Path\in\Pi(\hat{\Path})$, 
\begin{align*}
&\frac{\weight_\calH(\Path)}{\size{\Path}} \leq
\frac{\reduce{\weight}(\hat{\Path})}{\size{\hat{\Path}}}\text{ and,}\\
& \trans_{\hat{\policy}}(\hat{\Path}) = \trans_\policy\left(\Pi(\hat{\Path})\right) \quad\text{for all }\hat{\Path}\in\fpath{\reduce{\calH}}.
\end{align*}
Since for any $\Path\in\fpath{\semantics{\calH}}$, there exists $\hat{\Path}\in\fpath{\reduce{\calH}}$ such that $\size{\Path}=\size{\hat{\Path}}$ and $\Path_i\in\hat{\Path}_i$ for all $i$, 
\[
\bigunion_{\hat{\Path}\in\fpath{\reduce{\calH}}}\Pi(\hat{\Path})=\fpath{\semantics{\calH}}.
\]
Thus, expected mean payoff of an infinite path of $\reduce{\calH}$ under $\hat{\policy}$,
\begin{align*}
\avg_{\hat{\policy}}[\meanpay] =&\limsup_{n\rightarrow\infty}\sum_{\hat{\Path}\in\fpath{\reduce{\calH}}}\left(\frac{\reduce{\weight}(\hat{\Path})}{\size{\hat{\Path}}}\right)\trans_n^{\hat{\policy}}(\hat{\Path})\\
= &\limsup_{n\rightarrow\infty}\sum_{\hat{\Path}\in\fpath{\reduce{\calH}}}\left(\frac{\reduce{\weight}(\hat{\Path})}{\size{\hat{\Path}}}\right)\trans_n^{\policy}(\Pi(\hat{\Path}))\\
\geq &
\limsup_{n\rightarrow\infty}\sum_{\Path\in\fpath{\semantics{\calH}}}\left(\frac{\weight_\calH(\Path)}{\size{\Path}}\right)\trans_n^{\policy}(\Path)=\avg_\policy[\meanpay],
\end{align*}
that is, expected mean payoff of an infinite path of $\semantics{\calH}$ under $\policy$.
Since for any arbitrary policy $\policy$ of $\semantics{\calH}$, we can define a policy $\hat{\policy}$ of $\reduce{\calH}$ such that $\avg_{\hat{\policy}}[\meanpay]\geq\avg_\policy[\meanpay]$, we can say that,
\[
\sup_{\hat{\policy}\in\polset_{\reduce{\calH}}} \avg[\meanpay] \geq \sup_{\policy\in\polset_{\semantics{\calH}}} \avg[\meanpay].
\]
Thus, stability of $\reduce{\calH}$, i.e.,
    $\avg_{\hat{\policy}}[\meanpay] < 0$ for all $\hat{\policy}\in\polset_{\reduce{\calH}}$, implies 
    $\sup_{\hat{\policy}\in\polset_{\reduce{\calH}}} \avg_{\hat{\policy}}[\meanpay] < 0$,
    which further implies $\sup_{\policy\in\polset_{\semantics{\calH}}} \avg_\policy[\meanpay] < 0$,
which, by Theorem~\ref{thm:char-stab} implies that $\calH$ is stable too. Hence, our claim is proved.\qed
\end{proof}
Using Theorem \ref{thm:eps-abs-conc}, we can easily device a polynomial time algorithm that can verify stability of a $\pphs$ $\calH$. We simply construct $\reduce{\calH}$ from $\semantics{\calH}$, which takes polynomial time according to~\cite{prabhakar2013abstraction}, and apply the polynomial time algorithm discussed in Section~\ref{algo:lp-gain} on $\reduce{\calH}$ to find the maximum expected mean payoff of an infinite path of $\reduce{\calH}$. If the maximum expected mean payoff is less than zero, then we deduce that $\calH$ is stable. 
Note that, our algorithm tests a sufficient condition for stability, i.e., we cannot say that the $\pphs$ is unstable if the algorithm does not guarantee stability.

\begin{remark}
We would like to remark on the generality of this abstraction procedure. 
Note that, Theorem~\ref{thm:eps-abs-conc} holds even if we define stability using a different payoff function, like total effective payoff~\cite{boros2018markov} or, prefix-independent and submixing payoff~\cite{gimbert2007pure}. 
Not only that, it works for other notions of stability as well, such as almost sure stability. 
However, the algorithm for verifying stability of the abstract $\rmdp$ (Section~\ref{algo:lp-gain}) is not so general and needs to be modified for different notions of stability.
\end{remark}
\section{Experimental Evaluation}\label{sec:exp}

% Please add the following required packages to your document preamble:
% \usepackage{multirow}
% \usepackage{graphicx}
\begin{table}[h]
\centering
\resizebox{0.9\textwidth}{!}{%
\begin{tabular}{|c|c|c|c|cc|c|}
\hline
\multirow{2}{*}{$n$} &
  \multirow{2}{*}{Expt No.} &
  \multirow{2}{*}{Locs} &
  \multirow{2}{*}{$\size{\edgeset_{\Mw}}$} &
  \multicolumn{2}{c|}{Time (sec)} &
  \multirow{2}{*}{Stability} \\ \cline{5-6}
                     &   &      &        & \multicolumn{1}{c|}{$\reduce{T}$} & $T^{\text{stab}}$   &         \\ \hline
\multirow{4}{*}{$2$} & 1 & $4$  & $16$   & \multicolumn{1}{c|}{$0.113$}      & $1.8\times10^{-4}$ & Unknown \\ \cline{2-7} 
                     & 2 & $8$  & $64$   & \multicolumn{1}{c|}{$0.242$}      & $0.143$            & Yes     \\ \cline{2-7} 
                     & 3 & $12$ & $144$  & \multicolumn{1}{c|}{$0.367$}      & $0.149$            & Yes     \\ \cline{2-7} 
                     & 4 & $16$ & $256$  & \multicolumn{1}{c|}{$0.517$}      & $0.158$            & Yes     \\ \hline
\multirow{4}{*}{$3$} & 1 & $8$  & $48$   & \multicolumn{1}{c|}{$3.042$}      & $7.3\times10^{-4}$ & Unknown \\ \cline{2-7} 
                     & 2 & $16$ & $192$  & \multicolumn{1}{c|}{$6.059$}      & $0.213$            & Yes     \\ \cline{2-7} 
                     & 3 & $24$ & $432$  & \multicolumn{1}{c|}{$9.203$}      & $0.311$            & Yes     \\ \cline{2-7} 
                     & 4 & $32$ & $768$  & \multicolumn{1}{c|}{$12.213$}     & $0.438$            & Yes     \\ \hline
\multirow{4}{*}{$4$} & 1 & $4$  & $24$   & \multicolumn{1}{c|}{$2.676$}      & $5\times10^{-4}$   & Unknown \\ \cline{2-7} 
                     & 2 & $8$  & $96$   & \multicolumn{1}{c|}{$5.784$}      & $0.163$            & Yes     \\ \cline{2-7} 
                     & 3 & $12$ & $216$  & \multicolumn{1}{c|}{$9.018$}      & $0.197$            & Yes     \\ \cline{2-7} 
                     & 4 & $16$ & $384$  & \multicolumn{1}{c|}{$11.678$}     & $0.244$            & Yes     \\ \hline
\multirow{4}{*}{$5$} & 1 & $10$ & $90$   & \multicolumn{1}{c|}{$11.986$}     & $1.4\times10^{-3}$ & Unknown \\ \cline{2-7} 
                     & 2 & $20$ & $360$  & \multicolumn{1}{c|}{$25.098$}     & $0.275$            & Yes     \\ \cline{2-7} 
                     & 3 & $30$ & $810$  & \multicolumn{1}{c|}{$37.356$}     & $0.438$            & Yes     \\ \cline{2-7} 
                     & 4 & $40$ & $1440$ & \multicolumn{1}{c|}{$49.743$}     & $0.67$             & Yes     \\ \hline
\multirow{4}{*}{$6$} & 1 & $20$ & $240$  & \multicolumn{1}{c|}{$38.377$}     & $4\times10^{-3}$   & Unknown \\ \cline{2-7} 
                     & 2 & $40$ & $960$  & \multicolumn{1}{c|}{$78.049$}     & $0.631$            & Yes     \\ \cline{2-7} 
                     & 3 & $60$ & $2160$ & \multicolumn{1}{c|}{$120.906$}    & $1.241$            & Yes     \\ \cline{2-7} 
                     & 4 & $80$ & $3840$ & \multicolumn{1}{c|}{$160.049$}    & $2.08$             & Yes     \\ \hline
\end{tabular}%
}
\setlength\abovecaptionskip{10pt}
\setlength\belowcaptionskip{-5pt}
\caption{Verification of stability of PPHS}
\label{table:stab-verify}
\end{table}

In this section, we provide a brief overview of our implementation of the abstraction based stability verification algorithm of $\pphs$ and test it on a set of $\pphs$ benchmarks. 
Recall that, our technique consists of abstraction of a $\pphs$ to a finite $\rmdp$ and verification of stability of the abstract $\rmdp$ using the algorithm developed in Section~\ref{algo:lp-gain}.
We have implemented the abstraction procedure and the algorithm using Python. The abstract $\rmdp$ is stored as annotated graph using the \emph{networkx}~\cite{SciPyProceedings_11} package. In order to calculate each edge weight, $4n^2$ linear programming problems has to be solved, where $n$ is the number of dimensions~\cite{prabhakar2013abstraction}. This has been done using the \emph{pplpy} package. To find all MECs of the $\rmdp$ through strongly connected component decomposition~\cite{chatterjee2013faster}, \emph{networkx} functions are used. For linear programming, the software \emph{Gurobi} and its python handler \emph{gurobipy} are used. All experiments have been performed on macOS Big Sur with Quad-Core (Intel Core i7) $2.8\text{GHz}\times1$ Processor and 16GB RAM.

We have tested the effect of increasing the number of dimensions and the number of locations of the $\pphs$ on the time requirement of our algorithm. 
We have varied the number of dimensions $n$ from $2$ to $6$ and created four sample $\pphs$ for each dimension. 
For dimension $2$ and $3$, four quadrants of $\real^2$ and eight octants of $\real^3$ are chosen respectively as the invariant regions. 
For higher dimensions ($n=4$, $5$ and $6$), $3$-dimensional hyperplanes of $\real^n$ are chosen as the invariant regions. 
For experiment $j$ ($j\in[4]$) of dimension $n$, $j$ locations are created for each invariant region. 
For example, the $\pphs$ corresponding to experiment $2$ of dimension $3$ has $2$ locations per octant, i.e., $2\times 8=16$ locations in total. 
For experiment $1$ in all dimensions, the flow polyhedron is set for each location such that at least one edge of the abstract $\rmdp$ has infinite weight. For experiments $3$ and $4$, flows are set such that at least one edge of the $\rmdp$ has positive weight. For experiment $2$ however, all flows are set such that no edge of the abstract $\rmdp$ can have positive weight.

We present our findings in Table~\ref{table:stab-verify}. Here $n$ is number of dimensions, $\text{Locs}$ is the total number of locations of the $\pphs$ corresponding to the experiment, $\size{\edgeset_{\Mw}}$ is the number of edges of the abstract $\rmdp$, $\reduce{T}$ denotes the time taken to generate the abstract $\rmdp$ and $T^{\text{stab}}$ denotes the time taken to verify stability of the abstract $\rmdp$. For each experiment, the average time over $50$ runs is reported. Finally, the Stability column provides the information on whether the $\pphs$ is stable or not. 
Note that, a $\pphs$ is stable if the abstract $\rmdp$ is stable, not necessarily the other way round. 
Thus, we cannot say a $\pphs$ is unstable if the stability checking algorithm (Section~\ref{algo:lp-gain}) designates the abstract $\rmdp$ to be unstable. 
Hence, in such cases, we have reported ``Unknown" in the Stability column. 
In other cases, we have reported ``Yes" in the Stability column, which means that the corresponding $\pphs$ is found to be stable. 
We observe that the abstraction time dominates over the verification time in all cases. 
In fact, the abstraction time increases rapidly with the number of dimensions $n$. This is expected since calculation of each edge weight of the abstract $\rmdp$ requires solving $4n^2$ linear programs~\cite{prabhakar2013abstraction}. 
Increase in the number of locations results in increased abstraction time as well since number of edges of the abstract $\rmdp$ increases. 
The verification time however, does not always increase with the number of dimensions. This is because, the verification time depends on the number of edges of the abstract $\rmdp$ and not on the number of dimensions.
For experiment $1$ however, verification time is extremely small as the algorithm has found an infinite 
weighted edge and deduces the abstract $\rmdp$ to be unstable without going through 
MEC-decomposition or LP solving.

For our second set of experiments, we test our algorithm on Linear Switched 
Systems~\cite{prajna2003analysis}. 
For Linear Switched System with $\real^n$ as the continuous state space, rate of change of the 
continuous state $v$ is a linear function of the current state, and this linear function changes 
arbitrarily. 
More precisely, the system evolving with dynamics $\dot{v}=A_1v$ ($A_1\in\real^{n\times n}$) can 
change its dynamics to $\dot{v}=A_2v$ ($\real^{n\times n}\ni A_2\neq A_1$) arbitrarily. 
We take Example 2 from~\cite{prajna2003analysis} where the system evolves in $\real^2$ with 
dynamics $\dot{v}=Av$, and $A$ switches between
\begin{equation*}
	A_1 = 
	\begin{bmatrix}
		-5 &\quad -4\\
		-1 &\quad -2
	\end{bmatrix}
	\text{ and }
	A_2 = 
	\begin{bmatrix}
		-2 &\quad -4\\
		20 &\quad -2
	\end{bmatrix}.
\end{equation*}
Instead of arbitrary switching, we assume probabilistic switching of dynamics, i.e., at any point of time 
the system either retains its dynamics or changes it (if possible) with equal probability. 
To analyze stability of this system, we apply hybridization technique discussed 
in~\cite{prabhakar2016hybridization}.
Basically, we partition the continuous state space into positively scaled regions and associate two 
locations to each partition such that, for the $i^{th}$ location, dynamics is given by $\dot{v}\in F_i$ 
($F_i$ is the polyhedron formed by points $A_iv$, where $v$ belongs to the partition). 
Change of location is allowed at the boundary of the corresponding partition only, at which point, one 
of the locations from the adjacent partition is chosen with equal probability. 
Clearly, this process generates a $\pphs$. 
We analyze stability of the $\pphs$ using our algorithm and generate $\pphs$ with finer partitions until 
stability is ensured. 
Note that, stability of the $\pphs$ implies stability of the Linear Switched 
System~\cite{prabhakar2016hybridization}.
In fact, we used the same partitions as in test cases SS4\_1, SS8\_1 and SS16\_1 
of~\cite{prabhakar2016hybridization}.
Stability is ensured for both $\pphs$ corresponding to  SS8\_1 and SS16\_1 and not for the $\pphs$ 
corresponding to SS4\_1, which matches with the observations of~\cite{prabhakar2016hybridization}.

\section{Conclusion}

In this paper, we have presented an algorithm for stability analysis of an important subclass of Stochastic Hybrid Systems, which we call the Polyhedral Probabilistic Hybrid System ($\pphs$). 
Our algorithm is based on abstraction based techniques, where we first abstract the $\pphs$ to a finite $\rmdp$ and then test the abstract $\rmdp$ for stability. 
Verification of stability of the abstract $\rmdp$ is extremely efficient, since it can be done using a polynomial-time algorithm. 
However, the abstraction time increases rapidly with the number of dimensions and always dominates over the verification time.
Hence, the entire process suffers from the curse of dimensionality. 
For our future work, we would like to develop compositional methods for analyzing stochastic stability to circumvent this problem.
Two other directions of future research are, exploring probabilistic stability analysis for almost sure notions and for more complex dynamics, including those with stochasticity in the continous dynamics and, analyzing stability in a chosen set of dimensions instead of all the dimensions, which is hard to achieve in reality.

%
% ---- Bibliography ----
%
% BibTeX users should specify bibliography style 'splncs04'.
% References will then be sorted and formatted in the correct style.
%
\bibliographystyle{splncs04}
\bibliography{spandan}

\end{document}